\documentclass[11pt]{article}
\usepackage[left=1in, right=1in, top=1in,
bottom=1in]{geometry}

\usepackage{amsmath}
\usepackage{amsthm}
\usepackage{amssymb}
\usepackage{amsfonts}
\usepackage{bbm}
\usepackage{graphicx}
\usepackage{grffile}
\usepackage{wrapfig,epsfig}
\usepackage{url}
\usepackage{epstopdf}
\usepackage{enumitem}
\usepackage{booktabs} 
\usepackage{multirow}
\usepackage{makecell}
\usepackage{subcaption}

\usepackage{algorithm}
\usepackage{algorithmicx}
\usepackage{algpseudocode}

\theoremstyle{plain}
\newtheorem{theorem}{Theorem}[section]
\newtheorem{lem}[theorem]{Lemma}

\newtheorem{assumption}[theorem]{Assumption}

\theoremstyle{definition}

\usepackage[pagebackref,breaklinks,colorlinks,citecolor=blue]{hyperref}

\usepackage{xcolor}
\usepackage{tcolorbox}
\usepackage{enumitem} 
\usepackage{threeparttable}
\usepackage{array}         
\usepackage{colortbl}       

\newtcolorbox{PromptBox}[1][]{
  colback=gray!5,     
  colframe=gray!60,   
  coltitle=black,      
  boxrule=0.5pt,      
  arc=2pt,             
  fonttitle=\bfseries\small, 
  fontupper=\small\ttfamily, 
  title=#1,            
  left=3pt, right=3pt, top=3pt, bottom=3pt, 
  #1
}

\usepackage{wrapfig}
\usepackage{pgfplots}
\pgfplotsset{compat=1.18}

\usepackage{newpxtext}
\usepackage{newpxmath}

\newcommand{\wh}{\widehat}
\newcommand{\wt}{\widetilde}

\newcommand{\R}{\mathbb{R}}

\let\P\relax
\DeclareMathOperator*{\P}{\mathbb{P}}
\DeclareMathOperator*{\E}{{\mathbb{E}}}

\renewcommand{\textsc}[1]{\textnormal{\scshape #1}}

\newcommand{\bx}{\mathbf{x}}

\newcommand{\bz}{\mathbf{z}}

\newcommand{\cD}{\mathcal{D}}

\newcommand{\cJ}{\mathcal{J}}
\newcommand{\cK}{\mathcal{K}}

\newcommand{\cS}{\mathcal{S}}

\makeatletter
\def\blfootnote{\gdef\@thefnmark{}\@footnotetext}
\makeatother

\begin{document}

\title{SLICE: Semantic Latent Injection via Compartmentalized Embedding for Image Watermarking
} 

\author{
  \begin{tabular}{cccc}
Zheng Gao$^1$ ~&~ Yifan Yang$^1$ ~&~ Xiaoyu Li$^1$ ~&~ Xiaoyan Feng$^2$
  \end{tabular}\\[1ex]  
  \begin{tabular}{ccc}
    Haoran Fan$^1$ ~&~ Yang Song$^1$ ~&~ Jiaojiao Jiang$^1$ \\[2ex]  
    \multicolumn{3}{c}{$^1$University of New South Wales} \\[1ex]  
    \multicolumn{3}{c}{$^2$Griffith University} 
  \end{tabular}
}



\maketitle

\begin{abstract}
Watermarking the initial noise of diffusion models has emerged as a promising approach for image provenance, but content-independent noise patterns can be forged via inversion and regeneration attacks. Recent semantic-aware watermarking methods improve robustness by conditioning verification on image semantics. However, their reliance on a single global semantic binding makes them vulnerable to localized but globally coherent semantic edits. 
To address this limitation and provide a trustworthy semantic-aware watermark, we propose \underline{\textbf{S}}emantic \underline{\textbf{L}}atent \underline{\textbf{I}}njection via \underline{\textbf{C}}ompartmentalized \underline{\textbf{E}}mbedding (\textbf{SLICE}). Our framework decouples image semantics into four semantic factors (\textit{subject}, \textit{environment}, \textit{action}, and \textit{detail}) and precisely anchors them to distinct regions in the initial Gaussian noise. This fine-grained semantic binding enables advanced watermark verification where semantic tampering is detectable and localizable. We theoretically justify why SLICE enables robust and reliable tamper localization and provides statistical guarantees on false-accept rates. Experimental results demonstrate that SLICE significantly outperforms existing baselines against advanced semantic-guided regeneration attacks, substantially reducing attack success while preserving image quality and semantic fidelity. Overall, SLICE offers a practical, training-free provenance solution that is both fine-grained in diagnosis and robust to realistic adversarial manipulations.
\blfootnote{Corresponding to: \texttt{zheng.gao1@unsw.edu.au}, \texttt{jiaojiao.jiang@unsw.edu.au}}

\end{abstract}

\section{Introduction}

Diffusion models~\cite{ho2020denoising,song2021score,rombach2022high,esser2024scaling,lai2025principles} have become a dominant paradigm for image synthesis, powering creative tools, editing systems, and large-scale content generation. As these models are increasingly deployed in real applications, determining whether an image was generated by AI and whether it has been subsequently manipulated has become a central provenance challenge~\cite{wang2023survey,cao2025survey}. Watermarking is especially advantageous in this context because it embeds an \emph{active} signal directly into the visual data, ensuring provenance persists even when external metadata is stripped, decoupled, or overwritten~\cite{wen2023treering-article,fernandez2023stable,Yang24GaussianShading-long}.

Existing watermarking methods for generative models span a broad spectrum. Early approaches largely operate in pixel or frequency space, or require modifying model parameters during training or fine-tuning. Representative examples include post-processing methods such as HiNet~\cite{jing2021hinet} and training-based approaches such as Artificial Fingerprinting~\cite{yu2021artificial} and Stable Signature~\cite{fernandez2023stable}. While these methods establish the feasibility of generative provenance, they are not well matched to modern latent diffusion pipelines: post-hoc signals are often brittle under inversion, resampling, or compression, whereas training-dependent schemes introduce substantial computational and operational overhead when models, keys, or user identities must change.

Training-free watermarking in the initial noise space offers a more deployable alternative. Tree-Ring~\cite{wen2023treering-article}, Gaussian Shading~\cite{Yang24GaussianShading-long}, and WIND~\cite{arabi2025hidden} embed watermark patterns directly into the diffusion sampling process and therefore avoid repeated retraining. However, because these methods are largely content-independent, their watermark patterns can often be forged, transplanted, or removed through generative forgery attacks such as Latent Forgery Attack (LFA)~\cite{jain2025forging-article} and Regeneration with a Private Model (RPM)~\cite{arabi2025hidden,muller2025black}.
More fundamentally, recent theoretical and empirical evidence shows that
pixel-level invisible watermarks are provably removable via regeneration
attacks~\cite{zhao2024invisible}, underscoring the need for watermarks
grounded in image semantics rather than low-level signal patterns.
To address this, Semantic Aware Image Watermarking (SEAL)~\cite{Arabi25SEAL-long} takes an important step forward by making the watermark \emph{semantic-aware}: detection is conditioned on the image content rather than on a fixed global pattern alone. This significantly improves resistance to generative forgery attacks and highlights the promise of semantic watermarking for AI provenance.

Despite this progress, current semantic watermarking schemes are still vulnerable to generative forgery attacks because they rely on a \emph{single global semantic binding}.  The recent Coherence-Preserving Semantic Injection (CSI) attack~\cite{gao2026breakingsemanticawarewatermarksllmguided} demonstrates that an adversary can perform locally targeted but globally coherent semantic edits that preserve overall scene plausibility while perturbing exactly those semantics on which watermark verification depends, making the watermark detectable while malicious content is injected. In other words, the image may remain semantically consistent at the holistic level and the watermark remains valid, yet critical local factors---such as the \textit{subject}, \textit{action}, \textit{environment}, or a discriminative \textit{detail}---can be maliciously altered. Once the watermark is tied to a monolithic global descriptor, such fine-grained edits can invalidate the binding without visibly destroying the image.

Our key observation is that image semantics is not monolithic. For provenance, an image is better modeled as a composition of partially independent semantic factors that are also spatially grounded. Recent advances in diffusion control and latent decomposition suggest that semantics can be localized and manipulated in a structured way without retraining, through spatial control and semantically interpretable latent directions~\cite{xie2023boxdiff,bartal2023multidiffusion,gandikota2025sliderspace}. This motivates a different watermarking principle: rather than binding one global semantic code to the entire latent, we should \emph{disentangle semantics and compartmentalize the watermark}. Under such a design, localized semantic edits should induce localized verification failures, making tampering not only detectable but also localizable.

Motivated by the above principle, we propose \underline{\textbf{S}}emantic \underline{\textbf{L}}atent \underline{\textbf{I}}njection via \underline{\textbf{C}}ompartmentalized \underline{\textbf{E}}mbedding (\textbf{SLICE}), a training-free semantic watermarking framework for diffusion models. Given a reference image, SLICE extracts a structured semantic representation and decomposes it into four semantic factors: \emph{subject}, \emph{environment}, \emph{action}, and \emph{detail}. Using keyed hashing and noise synthesis, SLICE then binds each factor to a dedicated, non-overlapping partition of the initial latent noise. This spatial-semantic design yields a compartmentalized watermark whose evidence is distributed across multiple latent partitions rather than collapsed into a single global code. At detection time, SLICE reconstructs reference noise from the semantics of a suspect image and performs both partition-wise and global verification. This enables the detector to distinguish among pristine watermarked images, images containing localized semantic tampering, and images that are unwatermarked or severely altered. By aligning semantic factors with separate latent partitions, SLICE turns localized edits into partition-specific mismatches, addressing the limitation of existing semantic schemes that are insensitive to such edits.   

We summarize our contributions as follows: (i) We identify a key limitation of existing semantic watermarks: global semantic binding is vulnerable to localized but globally coherent edits. (ii) We propose SLICE, a training-free watermarking framework that decomposes image semantics into four factors and binds each to a dedicated latent partition via compartmentalized embedding, enabling a unified verification strategy that authenticates global provenance while localizing semantic tampering. (iii) We provide a theoretical justification for SLICE’s robust and reliable tamper localization, along with statistical guarantees on false-accept rates. (iv) Extensive experiments are conducted to demonstrate strong robustness of SLICE against generative forgery attacks while preserving image quality and semantic fidelity.

\paragraph{Roadmap.} Section~\ref{sec:rela} reviews some related work. Section~\ref{sec:method} details the SLICE framework, covering factorized semantic extraction, spatially-partitioned injection, and three-state detection. Section~\ref{sec:theory} provides theoretical guarantees on tamper localization and false-accept rates. Section~\ref{sec:analysis} justifies our prompt language selection. Section~\ref{sec:experi} presents experimental results. Section~\ref{sec:conclusion} concludes the paper and outlines future directions.

\section{Related Work}\label{sec:rela}

\subsection{Watermarking for Generative Models}

Watermarking for generative models has evolved along parallel tracks in both
language and image domains. In LLMs, early inference-time methods established
detection through token-level distribution
biasing~\cite{kirchenbauer2023watermark}, with subsequent work introducing
unbiased or distortion-free
schemes~\cite{hu2023unbiased,kuditipudi2023robust}. A key shift occurred when
researchers began anchoring watermark evidence to higher-level
structure---SemStamp~\cite{hou2024semstamp} binds to sentence-level semantics,
while multi-bit methods~\cite{yoo2024advancing,feng2025bimark} further enable
source tracing with provable quality preservation. This trajectory, from global
statistical signals toward semantically grounded provenance, is mirrored in the
image domain.

For diffusion models, existing watermarking techniques primarily fall into three
paradigms. Post-processing methods are highly vulnerable to regenerative attacks
(e.g., SDEdit or JPEG compression). Model fine-tuning methods (e.g., Stable
Signature~\cite{fernandez2023stable}, WOUAF~\cite{kim2024wouaf}) embed signals
into network weights to improve robustness, but incur prohibitive training costs
and degrade zero-shot generation quality. Our approach aligns with the
training-free paradigm~\cite{wen2023treering-article,gesny2026guidance}, which
modifies the initial noise or inference trajectory to preserve generative
fidelity. Representative methods include Tree-Ring~\cite{wen2023treering-article},
Gaussian Shading~\cite{Yang24GaussianShading-long}, WIND~\cite{arabi2025hidden},
and ROBIN~\cite{huang2024robin}, Ring-ID~\cite{ci2024ringid} and PRC~\cite{gunn2025detectable}. However, these mainstream
training-free methods rely on global, content-independent spatial injection,
leaving them susceptible to localized geometric deformations or cropping.
This content-independence also exposes them to \emph{generative forgery attacks}.
Black-box adversaries
can forge or remove latent-noise watermarks by leveraging a proxy diffusion
model~\cite{muller2025black}, while a recent theoretical analysis frames semantic
watermark forgery as a rate--distortion problem~\cite{lee2025on}, revealing fundamental limits on
proxy-based attacks. The Coherence-Preserving Semantic Injection (CSI)
attack~\cite{gao2026breakingsemanticawarewatermarksllmguided} further shows that locally targeted but globally
coherent edits can bypass semantic-aware watermarks such as
SEAL~\cite{Arabi25SEAL-long}. These attacks collectively motivate the need for
fine-grained semantic binding, which is the core design principle of SLICE.

\subsection{Diffusion Models and Latent Inversion}

Latent Diffusion Models
(LDMs)~\cite{ho2020denoising,song2021score,rombach2022high} have revolutionized
generative AI by performing iterative denoising within a compressed latent space.
For watermarking, their deterministic sampling mechanism (e.g., DDIM) is crucial,
as it establishes a strict bijective mapping between the initial noise
distribution and the generated
image~\cite{song2020ddim,song2021score}. Recent analyses on inversion
stability~\cite{staniszewski2025there,lin2024schedule} confirm that optimizing
the noise schedule effectively mitigates accumulated trajectory errors.
While more recent architectures based on rectified
flows~\cite{esser2024scaling} adopt different sampling strategies, the
principle of deterministic invertibility remains central.
This structural stability theoretically renders the initial noise space an
optimal, mathematically rigorous carrier for covert signal injection.

\subsection{Semantic Control and Disentangled Representations}

To overcome the limitation of global watermarks, we draw upon recent
advancements in training-free semantic control. In LDMs, self-attention dictates
geometric topology, while cross-attention precisely localizes text-driven
features~\cite{liu2024towards}. Frameworks like
Attend-and-Excite~\cite{chefer2023attend} and BoxDiff~\cite{xie2023boxdiff}
exploit these attention maps during inference to achieve strict spatial
partitioning. Concurrently, latent disentanglement
research~\cite{gandikota2025sliderspace,yang2024diffusion,bartal2023multidiffusion}
demonstrates the ability to isolate specific visual attributes without semantic
interference. We repurpose these mechanisms to strictly anchor the watermark
within specific, disentangled semantic regions (e.g., a foreground object's
attention mask). This semantic-binding strategy ensures that as long as the core
object remains visually recognizable after an attack, the embedded watermark can
be successfully recovered.
\section{Method}\label{sec:method}

\begin{figure*}[t]
    \centering
    \includegraphics[width=0.95\linewidth]{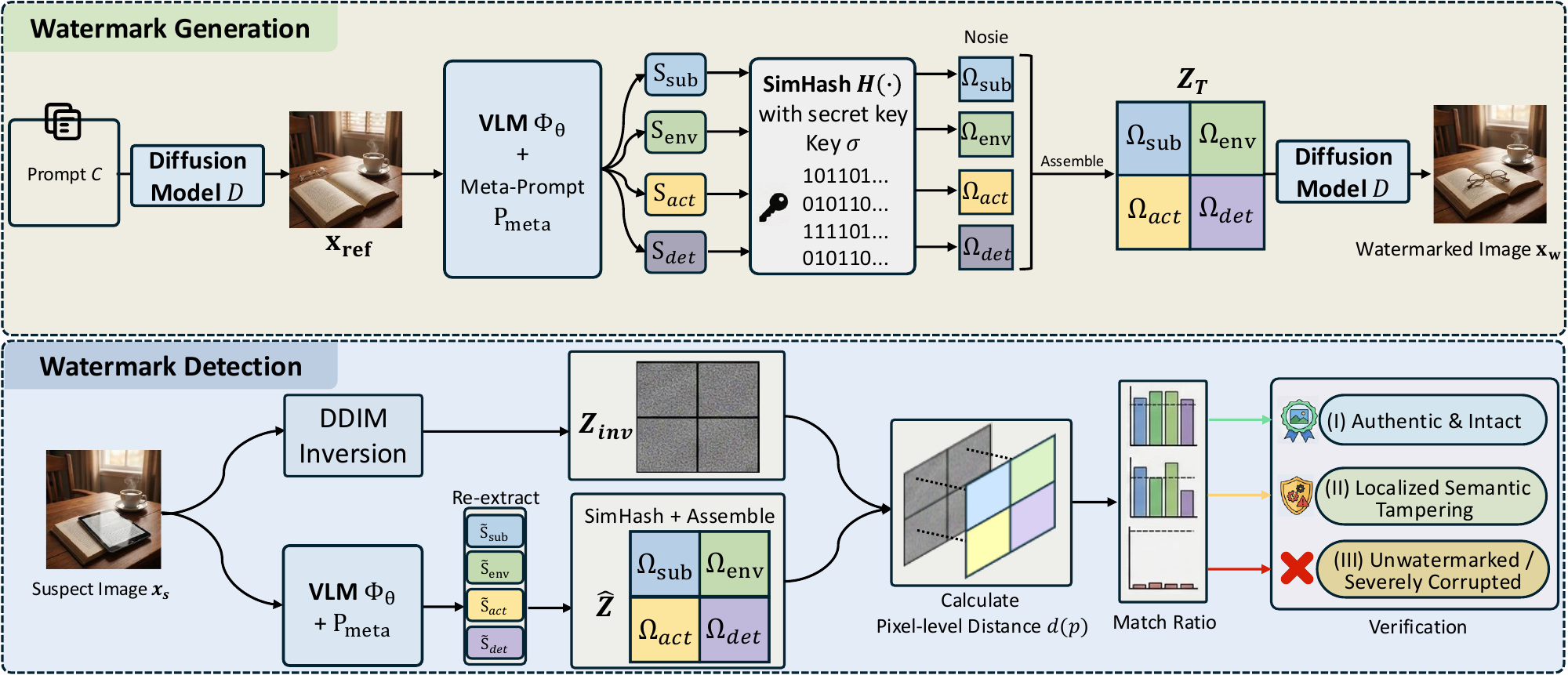} 
    \caption{\textbf{The overall framework of SLICE.} 
    }
    \label{fig:framework}
\end{figure*}

In this section, we introduce the framework of SLICE, as illustrated in Fig.~\ref{fig:framework}. At high level, SLICE embeds a semantic watermark by decomposing image content into four semantic factors (\textit{subject}, \textit{environment}, \textit{action}, and \textit{detail}) and binding each factor to a dedicated spatial partition of the initial latent. At detection time, the same factorized semantics are re-extracted from a suspect image to reconstruct a reference latent, which is then compared with the inverted latent in a partition-wise manner. This design enables both global provenance verification and localization of semantic tampering.

\subsection{Prompt-Conditioned Factorized Semantic Extraction}

Given a conditioning prompt $c$, we first sample a random latent $\bz_\mathrm{rand} \in \R^{h \times w \times d}$ from a Gaussian distribution, and generate an unwatermarked reference image $\bx_\mathrm{ref} = \cD(\bz_\mathrm{rand}, c)\in \R^{H \times W \times 3}$, where $\cD$ denotes a pretrained latent diffusion model. The reference image  $\bx_\mathrm{ref}$ represents the intended visual content and serves as the basis for semantic watermark construction. 

To make the watermark sensitive to localized semantic edits, we avoid using a single holistic caption and instead decompose the image into four semantic factors. Specifically, we define the set
\begin{align*}
    \cK = \{\mathrm{sub}, \mathrm{env}, \mathrm{act}, \mathrm{det}\}
\end{align*}
whose elements correspond to \textit{subject}, \textit{environment}, \textit{action}, and \textit{detail}, respectively. These four factors capture crucial elements of the semantics of an image from distinct and complete aspects. To obtain these factors, we then use a vision-language model (VLM) $\Phi_\theta$ together with a carefully designed meta-prompt $\mathcal{P}_{\mathrm{meta}}$ to extract a structured descriptor set
\begin{align*}
    \cS = \{s_k\}_{k \in \cK} = \Phi_\theta(\bx_\mathrm{ref}, \mathcal{P}_{\mathrm{meta}}).
\end{align*}

The role of $\mathcal{P}_{\mathrm{meta}}$ is to constrain the VLM output into a consistent four-field format, so that each descriptor captures one semantic factor rather than a mixed description of the entire image. This factorized representation is crucial for SLICE: it provides a stable semantic basis for watermark generation and later enables factor-specific verification during detection. The details of the meta-prompt are in Fig.~\ref{fig:meta_prompt}.

\begin{figure}[h]
    \centering
    \small
    \begin{PromptBox}[title={Meta-Prompt $\mathcal{P}_{\mathrm{meta}}$ for Factorized Semantic Extraction}]
\textbf{[System Instruction]}
You are a semantic disentanglement engine. Below is a demonstration of how to decouple visual semantics into four semantic factors. \textbf{Based on this example, strictly analyze the target image and output the concise descriptors for the following four categories:}

\textbf{[Demonstration]}
\textit{Input:} $<$Image: A young boy running on a grassy field$>$
\textit{Output:}
\begin{itemize}[nosep, leftmargin=1em]
    \item \textbf{[Subject]}: Young boy.
    \item \textbf{[Action]}: Running, sprinting forward.
    \item \textbf{[Environment]}: Grassy field, daytime, park setting.
    \item \textbf{[Detail]}: Red t-shirt, motion blur on legs.
\end{itemize}

\textbf{[Inference Task]}
\textit{Input:} $<$Target Image $\mathbf{x}_{ref}$$>$
\textit{Output:}
(Strictly follow the format above to generate the four descriptors...)
    \end{PromptBox}
    
\caption{\textbf{Structure of the Meta-Prompt $\mathcal{P}_{\mathrm{meta}}$.} 
}
    \label{fig:meta_prompt}
\end{figure}

\subsection{Spatially-Partitioned Semantic Injection}

Let the latent space of the diffusion model be a subset of $\R^{h \times w \times d}$.
To bind each semantic factor to a distinct spatial region, we partition the latent grid into four disjoint regions $\{\Omega_k\}_{k \in \cK}$ such that
\begin{align*}
    \Omega_k \cap \Omega_{k'} = \varnothing ~~\mathrm{for}~ k \neq k', \quad \bigcup_{k \in \cK} \Omega_k = \{1,\ldots, h\} \times \{1, \ldots, w\},
\end{align*}
where each region $\Omega_k$ is assigned to one semantic factor $k \in \cK$.

Let $E(\cdot)$ denote the text encoder and let $H(\cdot,\cdot,\sigma)$ denote a keyed hash-based synthesis function parameterized by a secret key $\sigma$. For each semantic factor $k \in \cK$, and each spatial position $p = (i,j) \in \Omega_k$, we generate the corresponding latent value as
\begin{align*}
    \bz_T(p) = H(E(s_k), p, \sigma), \quad  \forall p \in \Omega_k, k \in \cK.
\end{align*}
In other words, the latent entries inside each region $\Omega_k$ depend only on the descriptor $s_k$, the spatial position $p$, and the secret key $\sigma$.

By assembling all regions, we obtain the full watermarked latent $\bz_T$, and the final watermarked image is generated as $\bx_w = \cD(\bz_T, c)$. This compartmentalized design is the core of SLICE. Unlike global semantic watermarking methods that embed a single signal across the entire latent, our method binds each semantic factor to a dedicated latent partition. As a result, a localized semantic edit is expected to disrupt mainly the corresponding partition, while unrelated partitions remain comparatively stable.

\subsection{Watermark Detection}

At detection time, given a suspect image $x_s$, we first apply the deterministic DDIM inversion~\cite{song2020ddim} to recover its approximate initial latent $\bz_\mathrm{inv} = \mathrm{Inversion}(\bx_s)$. Next, using the same VLM $\Phi_\theta$ and the same meta prompt $\mathcal{P}_{\mathrm{meta}}$, we re-extract the semantic descriptors from the suspect image:
\begin{align*}
    \wt{S} = \{\wt{s}\}_{k \in \cK} = \Phi_\theta(x_s, \mathcal{P}_{\mathrm{meta}}).
\end{align*}
Based on these re-extracted descriptors, we reconstruct a reference latent $\wh \bz_T$ using the same synthesis rule as in the generation phase:
\begin{align*}
    \wh \bz_T(p) = H(E(\wt{s}_k), p, \sigma)), \quad p \in \Omega_k.
\end{align*}

We then compare the inverted latent $\bz_\mathrm{inv}$ and the reconstructed latent $\wh{\bz}_T$ in a pointwise manner. For each spatial location $p$, we define the distance 
$
    d(p) = \|\bz_\mathrm{inv} - \wh{\bz}_T(p)\|_2,
$
where the norm is computed over the latent dimension. A location is considered matched if $d(p) \leq \tau$ where $\tau$ is a pixel-level distance threshold. We define the region-level match ratio and the global match ratio as
\begin{align*}
    m_k = \frac{1}{|\Omega_k|} \sum_{p \in \Omega_k} \mathbf{1}[d(p) \leq \tau_k], ~~ \forall k \in \cK, ~~~ \mathrm{and}~~~ m_{g} = \frac{1}{hw} \sum_{p} \mathbf{1}[d(p) \leq \tau_d],
\end{align*}
respectively, where $\mathbf{1}[\cdot]$ denotes the indicator function.

Using a set of local thresholds $\{\tau_k\}_{k \in \cK}$ and a global threshold $\tau_g$, the detector outputs one of three states:

\textbf{State I: Authentic and Intact.} The image is classified as authentic and intact when $m_g \geq \tau_g$ and $m_k \geq \tau_k$ for all $k \in \cK$. This indicates that both the global watermark and all factor-specific partitions remain consistent.

\textbf{State II: Localized Semantic Tampering.} The image is classified as semantically tampered when $m_g \geq \tau_g$ but $m_k < \tau_k$ for some $k \in \cK$. In this case, the image still matches the protected model at a global level, but at least one semantic factor fails local verification. The failed partition directly indicates the likely tampered factor.

\textbf{State III: Unwatermarked or Severely Corrupted.} The image is classified as unwatermarked or severely corrupted when $m_g < \tau_g$. This means that the watermark is either absent or has been destroyed by a strong transformation.
\section{Theoretical Analysis}
\label{sec:theory}
In this section, we present a theoretical analysis of SLICE. We make some practical assumptions. We first assume that DDIM inversion is accurate on most locations of each region.

\begin{assumption}[Bounded DDIM inversion error]\label{as:bound_err}
    Assume that for each factor $k \in \cK$, there exists a set $A_k \subseteq \Omega_k$ such that 
    \begin{align*}
        |A_k| \geq (1-\beta_k)|\Omega_k|, ~~~\mathrm{and}~~~ \|\bz_\mathrm{inv}(p) - \bz_T(p)\|_2 \leq \epsilon_k, ~\forall p \in A_k.
    \end{align*}
\end{assumption}

We make the following assumption to model the distinct behaviors of untampered and tampered semantic factors after semantic re-extraction.

\begin{assumption}[Stability and separation of semantic corruption]\label{as:sem_pertb}
    Let $\cJ$ $\subseteq \cK$ be the set of tampered semantic factors. Assume that for each untampered factor $k \in K \setminus \cJ$, there exists a set $B_k \subseteq \Omega_k$ such that \begin{align*}
        |B_k| \geq (1-\gamma_k)|\Omega_k|, ~~~\mathrm{and}~~~ \|\wh{\bz}_T - \bz_T(p)\|_2 \leq \delta_k, ~\forall p \in B_k,
    \end{align*}
    and for each tampered factor $k \in \cJ$,  there exists a set $C_k \subseteq \Omega_k$ such that
    \begin{align*}
        |C_k| \geq \rho_k|\Omega_k|, ~~~\mathrm{and}~~~ \|\wh{\bz}_T - \bz_T(p)\|_2 \geq \Delta_k, ~\forall p \in C_k.
    \end{align*}
\end{assumption}

Under Assumptions~\ref{as:bound_err} and~\ref{as:sem_pertb}, we analyze the partition-wise match ratios when only a subset of semantic factors is corrupted. The key is to choose the local thresholds so that untampered factors tolerate the combined inversion and re-extraction errors, while tampered factors remain separated by a clear margin. The following theorem quantifies the resulting localization guarantees.

\begin{theorem}[Robust localization under partial corruption]\label{thm:main}
    Let $\cJ \subseteq \cK$ be the set of tampered semantic factors. Assume that Assumptions~\ref{as:bound_err} and~\ref{as:sem_pertb} hold. If the set of local threshold $\{\tau_k\}_{k\in\cK}$ satisfies $\tau_k \geq \epsilon_k + \delta_k$ for all $k \in \cK \setminus\cJ$ and $\tau_k < \Delta_k - \epsilon_k$ for all $k \in J$, then the following holds:
    \begin{enumerate}
        \item For every untampered factor $k \in \cK \setminus \cJ$, we have $m_k \geq 1 - \beta_k  - \gamma_k$.
        \item For every tampered factor $k \in \cJ$, we have $m_k \leq 1 - (\rho_k - \beta_k)_+$.
        \item The global match ratio satisfies
        \begin{align*}
            \sum_{k \in \cK \setminus \cJ} \frac{|\Omega_k|}{hw}(1-\beta_k - \gamma_k) \leq m_g \leq \sum_{k \in\cK \setminus \cJ } \frac{|\Omega_k|}{hw} + \sum_{k \in \cJ} \frac{|\Omega_k|}{hw} \Big( 1 - (\rho_k - \beta_k)_+ \Big).
        \end{align*}
    \end{enumerate}
    We write $a_+ = \max\{a, 0\}$ for any $a \in \R$.
\end{theorem}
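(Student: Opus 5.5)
The plan is a pointwise triangle-inequality argument on each partition $\Omega_k$, followed by inclusion--exclusion counting on the good sets from Assumptions~\ref{as:bound_err} and~\ref{as:sem_pertb}. Throughout, $d(p)=\|\bz_\mathrm{inv}(p)-\wh\bz_T(p)\|_2$ is read per location, and the local matching rule is $\mathbf{1}[d(p)\le\tau_k]$. Two triangle inequalities carry the argument:
\[
d(p)\le \|\bz_\mathrm{inv}(p)-\bz_T(p)\|_2+\|\wh\bz_T(p)-\bz_T(p)\|_2,
\]
\[
d(p)\ge \|\wh\bz_T(p)-\bz_T(p)\|_2-\|\bz_\mathrm{inv}(p)-\bz_T(p)\|_2 .
\]

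\textbf{Untampered factors.} Fix $k\in\cK\setminus\cJ$. For $p\in A_k\cap B_k$, the upper inequality gives $d(p)\le\epsilon_k+\delta_k\le\tau_k$, so every such $p$ is matched. Inclusion--exclusion gives
\[
|A_k\cap B_k|\ge |A_k|+|B_k|-|\Omega_k|\ge(1-\beta_k-\gamma_k)|\Omega_k|.
\]
Dividing by $|\Omega_k|$ yields $m_k\ge 1-\beta_k-\gamma_k$, which is item 1.

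\textbf{Tampered factors.} Fix $k\in\cJ$. For $p\in A_k\cap C_k$, the lower inequality gives $d(p)\ge\Delta_k-\epsilon_k>\tau_k$, so every such $p$ is unmatched. Again by inclusion--exclusion,
\[
|A_k\cap C_k|\ge(\rho_k+1-\beta_k-1)|\Omega_k|=(\rho_k-\beta_k)|\Omega_k|.
\]
Since the count is also trivially nonnegative, it is at least $(\rho_k-\beta_k)_+|\Omega_k|$. The matched fraction is then at most $1-(\rho_k-\beta_k)_+$, which is item 2.

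\textbf{Global ratio.} Because the $\Omega_k$ partition the grid, $m_g=\sum_k\frac{|\Omega_k|}{hw}m_k$. This identity holds once the global indicator uses the same per-region thresholds, i.e.\ the global count aggregates the regional matches. For the lower bound, I drop the tampered terms (they are $\ge0$) and apply item 1 to the rest. For the upper bound, I bound $m_k\le1$ on untampered regions and apply item 2 on tampered ones. That gives item 3.

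\textbf{Main obstacle.} The analysis itself is routine; the real issue is a notational mismatch. The paper defines $m_g$ with a separate threshold $\tau_d$, while the theorem's bounds only follow if $m_g$ is the $|\Omega_k|$-weighted average of the $m_k$. I would address this by stating explicitly that the global match is taken with the local thresholds $\tau_k$ on each $\Omega_k$. Alternatively, if $\tau_d$ is kept, I would require $\tau_k\le\tau_d$ for the lower bound and $\tau_d\le\tau_k$ for the upper bound; this forces $\tau_d=\tau_k$ on every region, so the two readings coincide. I would also note that the strict inequality $\tau_k<\Delta_k-\epsilon_k$ is what makes $A_k\cap C_k$ unmatched, and that the hypothesis on untampered $k$ is the one that matters for item 1.
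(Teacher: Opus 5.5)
Your proposal is correct and follows the paper's proof: the triangle inequality on $A_k\cap B_k$, the reverse triangle inequality on $A_k\cap C_k$, inclusion--exclusion for both counts, and then summing over the partition. For the global bound, the paper's proof also evaluates $m_g$ with the local thresholds $\tau_k$ on each $\Omega_k$, which matches your resolution of the $\tau_d$ mismatch. One aside is wrong, though: keeping a separate $\tau_d$ does not force $\tau_d=\tau_k$. It only needs $\tau_d\ge\epsilon_k+\delta_k$ for $k\in\cK\setminus\cJ$ and $\tau_d<\Delta_k-\epsilon_k$ for $k\in\cJ$, because the lower- and upper-bound conditions apply to disjoint sets of factors.
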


Theorem~\ref{thm:main} formalizes the core intuition behind SLICE under partial semantic corruption. With the above threshold choice, untampered regions retain a high match ratio whereas any tampered region must lose matches on a nontrivial subset of locations, and the global match ratio is bounded by the combined contributions of untampered and tampered regions. This explains why SLICE can preserve global provenance evidence while localizing the likely tampered semantic factor.

Next, we provide a statistical security guarantee against keyless or unwatermarked inputs. 

\begin{theorem}[Exponential false-accept bound for keyless or unwatermarked inputs]\label{thm:exp}
    Let $x_s$ be an unwatermarked image or keyless forgery attempt. We define $I_p = \mathbf{1}[d(p) \leq \tau_k].$ Assume that conditioned on the re-extracted semantics $\wt \cS$, the variables $\{I_p\}_p$ are independent and for all position $p$, $\P(I_p = 1 \mid\wt \cS) \leq q$ for some $q > 0$. Then the following holds:
    \begin{enumerate}
        \item If $q < \tau_g$, the probability that the image is accepted as ``watermark present'' (i.e. State I or State II) statisfies
        \begin{align*}
            \P(\mathrm{State~I~or~State~II}  \mid \wt \cS) \leq \exp\Big(-hw D_\mathrm{KL}(\tau_g \| q)\Big),
        \end{align*}
        where $ D_\mathrm{KL}(a \| b) =  a \ln\frac{a}{b} + (1-a) \ln\frac{1-a}{1-b}$ is the KL-divergence.
        \item If $q < \tau_k$ for every $k \in \cK$, then the probability that the image is accepted as authentic and intact (i.e. State I) satisfies
        \begin{align*}
            \P( \mathrm{State~I} \mid \wt \cS) \leq \exp\left( -\sum_{k \in \cK} |\Omega_k| D_\mathrm{KL}(\tau_k \| q)\right).
        \end{align*}
    \end{enumerate}
\end{theorem}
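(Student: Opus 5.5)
The plan is a Chernoff--Hoeffding argument in the relative-entropy form, applied to a sum of independent Bernoulli indicators whose means are dominated by $q$. Everything is done conditionally on $\wt\cS$, so the reconstructed latent $\wh\bz_T$ and the independence hypothesis are both fixed. For part~1, acceptance as State~I or State~II means exactly $m_g \ge \tau_g$, that is, $\sum_p I_p \ge \tau_g hw$. Here I read the indicator in $m_g$ as $I_p$ with the appropriate per-location threshold, which is how the statement defines $I_p$.

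For part~1 I would take the following steps.
\begin{enumerate}
\item Fix $\lambda>0$ and apply Markov's inequality to $e^{\lambda\sum_p I_p}$, giving $\P(\sum_p I_p \ge \tau_g hw \mid \wt\cS) \le e^{-\lambda \tau_g hw}\prod_p \E[e^{\lambda I_p}\mid\wt\cS]$. This factorization uses conditional independence.
\item Bound each factor by $\E[e^{\lambda I_p}\mid\wt\cS] = 1 + (e^\lambda - 1)\P(I_p=1\mid\wt\cS) \le 1 + q(e^\lambda-1)$. This step uses $\lambda>0$ and the bound $\P(I_p=1\mid\wt\cS)\le q$.
\item Optimize $\lambda$ in $\exp\big(hw[\ln(1-q+qe^\lambda) - \lambda\tau_g]\big)$. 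The standard choice is $e^{\lambda} = \frac{\tau_g(1-q)}{q(1-\tau_g)}$, and it is positive precisely because $q<\tau_g$. Substituting gives exponent $-hw\,D_{\mathrm{KL}}(\tau_g\|q)$.
\end{enumerate}
In the degenerate case $\tau_g = 1$, the KL expression reduces to $\ln(1/q)$, and the bound follows directly from $\P(\text{all } I_p=1)\le q^{hw}$. I would note this and not dwell on it.

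For part~2, State~I requires in particular $m_k\ge\tau_k$ for every $k\in\cK$ (the global condition can simply be dropped, which only enlarges the event). So $\P(\mathrm{State~I}\mid\wt\cS)\le \P(\bigcap_k\{\sum_{p\in\Omega_k} I_p \ge \tau_k|\Omega_k|\}\mid\wt\cS)$. The regions $\Omega_k$ are disjoint and the $I_p$ are conditionally independent, so the four partial sums are conditionally independent. Hence this probability equals the product over $k$ of the individual tail probabilities. Applying the part~1 argument to each region, with $|\Omega_k|$ in place of $hw$ and $\tau_k$ in place of $\tau_g$ (valid since $q<\tau_k$), gives $\exp(-|\Omega_k| D_{\mathrm{KL}}(\tau_k\|q))$ per region. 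Multiplying these yields the stated bound.

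The main obstacle is the monotonicity step: the Bernoulli means are only bounded by $q$, not equal to it. I handle this through the moment-generating-function domination in step~2, which needs $\lambda\ge 0$; that holds exactly when $q\le\tau$. A minor notational point is that the paper writes $\tau_k$ and $\tau_d$ for the pointwise distance thresholds while also using $\tau_k,\tau_g$ for the ratio thresholds. I treat the $I_p$ as given events, since only their conditional independence and the bound $q$ matter.
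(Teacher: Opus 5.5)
Your proposal is correct and follows essentially the same route as the paper: a KL-form Chernoff bound on $\sum_p I_p$ for part~1, and for part~2 dropping the global condition, factoring over the disjoint regions $\Omega_k$ by conditional independence, and applying the per-region Chernoff bound. The only difference is that you derive the Chernoff step directly from the moment generating function rather than citing it. This makes explicit the domination needed when the means are merely bounded by $q$, a step the paper leaves implicit when it invokes its cited Chernoff lemma with $q$ in place of the true mean.
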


Under the assumption that accidental pointwise matches occur independently and with probability at most $q$, Theorem~\ref{thm:exp} guarantees that the probability that such an input exceeds the global or local verification thresholds decays exponentially with the number of latent positions.
\section{Prompt Language Selection}\label{sec:analysis}

\begin{wrapfigure}{r}{0.52\textwidth}
  \centering
  \vspace{-2mm}
  \includegraphics[width=\linewidth]{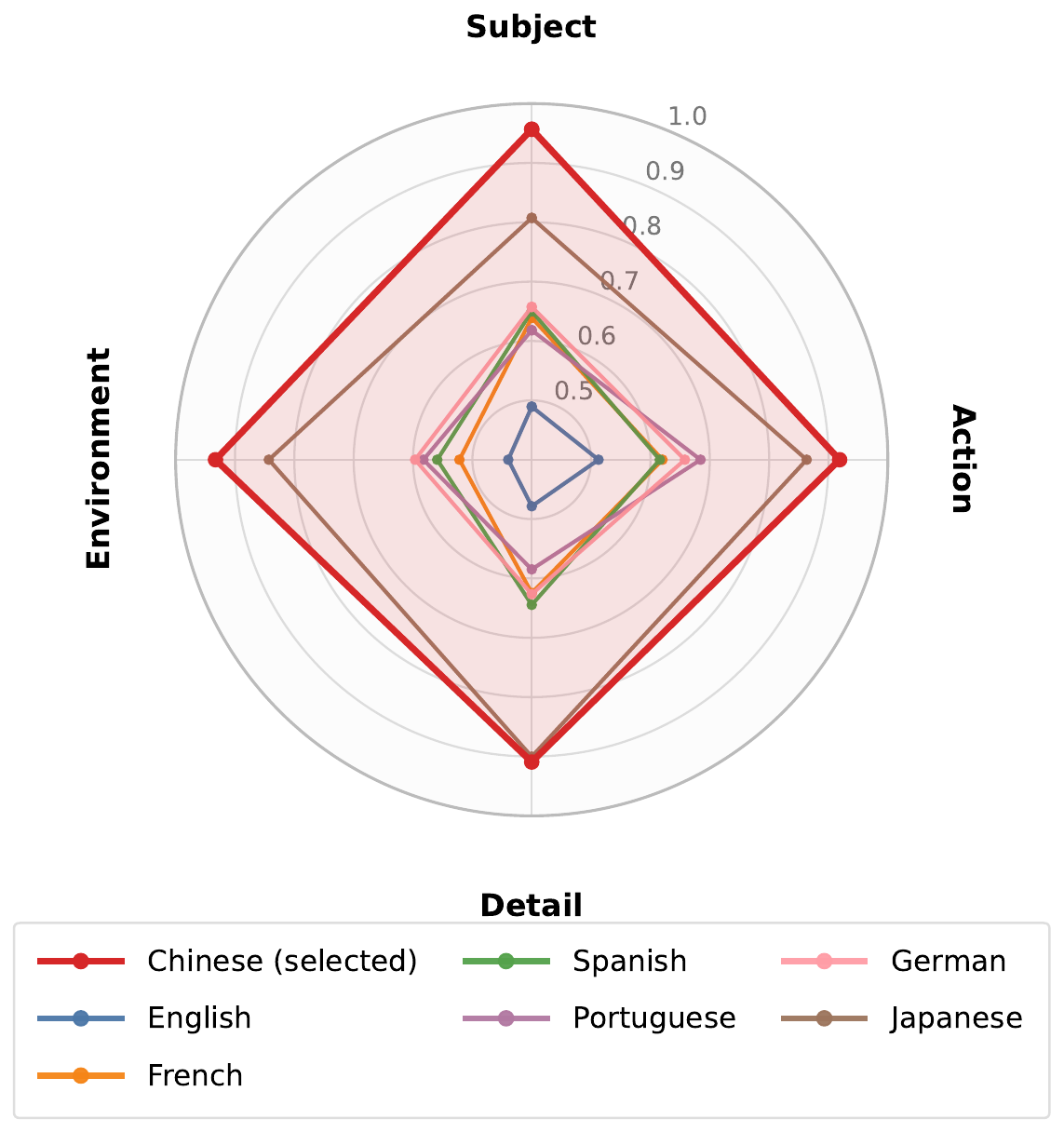} 
  \caption{\textbf{Semantic extraction stability across prompt languages}. The axes represent text embedding cosine similarity between initial and re-extracted descriptors.}
  \label{fig:radar_language}
  \vspace{-7mm}
\end{wrapfigure}

Reliable semantic extraction is critical for SLICE, as watermark verification requires the semantic descriptors extracted from the watermarked image to remain consistent with those from the reference image. However, Vision–Language Models (VLMs) may exhibit different instruction-following behaviors across languages, potentially introducing semantic drift during extraction.

To study this effect, we evaluate seven prompt languages (Chinese, English, French, Spanish, Portuguese, German, and Japanese). Our evaluation follows a ``Semantic Consistency Loop''. First, we deploy a VLM to extract the initial semantic descriptor set $S_\mathrm{ref}$ from the original reference image $\bx_\mathrm{ref}$. Second, using $S_\mathrm{ref}$ as the anchor, we synthesize the corresponding noise partitions and generate the watermarked image $\bx_w$. Third, we re-extract the semantic descriptors $S_w$ from $\bx_w$ using the same VLM and Meta-Prompt configuration. Finally, we utilize a BLIP encoder to extract the text embeddings of $S_\mathrm{ref}$ and $S_w$, and quantify the extraction stability by calculating their cosine similarity.

As shown in Fig.~\ref{fig:radar_language}, Chinese prompts consistently achieve the highest stability across all four semantic dimensions (\textit{subject}, \textit{environment}, \textit{action}, and \textit{detail}), with cosine similarity values approaching 1.0. In contrast, other languages show noticeable semantic drift, particularly in ``\textit{detail}'' and ``\textit{action}'' dimensions. We attribute this behavior to stronger alignment between the Qwen3-VL model and Chinese prompts, which improves the stability of cross-modal semantic representations. Based on this observation, we adopt Chinese as the default prompt language in our semantic extraction module. 
\section{Experiment}\label{sec:experi}
In this section, we evaluate the effectiveness and robustness of our proposed watermark. We employ the Stable Diffusion V2 model as the primary generation model and utilize QWen3-VL as the Vision-Language Model (VLM) for semantic extraction.

\textbf{Watermark baselines}: We select four state-of-the-art watermark techniques for comparison, including content-independent watermarks: Gaussian Shading Watermark (GSW) , Tree-Ring Watermark (TRW) , WIND , and the content-aware semantic watermark SEAL.

\textbf{Attack baselines}: To thoroughly test the robustness of our defense, we subject the watermarked images to three advanced semantic attacks: Latent Forgery Attack (LFA)~\cite{jain2025forging-article} , Regeneration with the Private Model (RPM)~\cite{Arabi25SEAL-long}, and Coherence-Preserving Semantic Injection (CSI)~\cite{gao2026breakingsemanticawarewatermarksllmguided}.

\definecolor{MyGreen}{RGB}{180, 232, 200}
\definecolor{HeaderGray}{gray}{0.9}

\begin{table}[t]
\centering
\small
\begin{threeparttable}
\caption{\textbf{Quantitative comparison of watermark resistance against generative semantic attacks}.}
\label{tab:detector_comparison}
\begin{tabular}{l|c c|c}
\hline
\rowcolor{HeaderGray}
\textbf{Detector (ASR \% $\downarrow$)} & \textbf{LFA~\cite{jain2025forging-article}} & \textbf{RPM~\cite{Arabi25SEAL-long}} & \textbf{CSI~\cite{gao2026breakingsemanticawarewatermarksllmguided}}\\
\hline
\multicolumn{4}{c}{\textit{Content-independent semantic watermarks}} \\
\hline
Gaussian Shading~\cite{Yang24GaussianShading-long} & 100 & 100 & \textbf{100}  \\
Tree-Ring~\cite{wen2023treering-article} & 93.81 & 100 & \textbf{100}  \\
WIND~\cite{arabi2025hidden} & 100 & 100 & \textbf{100} \\
\hline
\multicolumn{4}{c}{\textit{Content-aware semantic watermark}} \\
\hline
SEAL~\cite{Arabi25SEAL-long} & 0 & 7 & \textbf{81}\\
\rowcolor{MyGreen!55}
SLICE & 0 & 5 & \textbf{19}\\
\hline
\textbf{Mean $\downarrow$} & 58.76 & 62.40 & \textbf{80.00} \\
\hline
\end{tabular}
\end{threeparttable}
\end{table}

\subsection{Resistance Against Generative Forgery Attacks}
To evaluate the resistance of our proposed watermarking mechanism against generative forgery attacks, we conduct a series of comparative experiments. In this setup, we comprehensively compare the defensive capabilities of our method against mainstream semantic watermarks (Gaussian Shading, Tree-Ring, and WIND) and a content-aware semantic watermark (SEAL). We continue to employ the Attack Success Rate (ASR) as the core evaluation metric. From a defensive perspective, a successful attack is defined as a tampered image erroneously triggering a positive "watermark present" decision from the detector under the predefined threshold. Consequently, a lower ASR indicates stronger resistance of the watermarking scheme against forgery attacks. 

As shown in Table~\ref{tab:detector_comparison} , when facing generative semantic attacks including LFA, RPM, and the highly potent CSI, the defensive lines of the semantic watermarks (Gaussian Shading, Tree-Ring, and WIND) almost entirely collapse, with the ASRs reaching nearly 100\% across the board. In the comparison of content-aware semantic watermarks, although SEAL effectively defends against LFA (0\% ASR) and RPM (7\% ASR), its defense is severely breached by the more advanced CSI attack, with the ASR surging to 81\%. In contrast, our proposed method records exceptionally low ASRs of 0\% and 5\% against LFA and RPM, respectively. More importantly, even when subjected to the strongest CSI attack that compromised SEAL, our method successfully suppresses the ASR to a remarkably low 19\%.

These results demonstrate that our mechanism exhibits superior resistance compared to existing baseline defenses. This enhancement is primarily attributed to our Spatially-Partitioned Semantic Injection mechanism. By strictly binding spatial regions to fine-grained semantic attributes within the latent space , any attempt by an attacker to modify local semantics (e.g., maliciously replacing the "Environment" or "Action") immediately disrupts the localized watermark structure within the corresponding partition. Coupled with our rigorous multi-granularity verification logic , the detector can acutely capture localized tampering and trigger an alert, thereby rendering advanced generative forgery attacks ineffective.

\begin{table}[t]
\centering
\caption{\textbf{Resistance of SLICE against common image perturbations.}} 
\label{tab:robustness_no_cs}
\setlength{\tabcolsep}{8pt}
\begin{tabular}{l c c c c c c}
\toprule
\textbf{Transformation} & Clean & Rotate & JPEG & Blur & Noise & Bright \\
\midrule
\textbf{Accuracy}       & \textbf{1.000} & \textbf{1.000} & 0.990 & 0.988 & 0.993 & 0.941 \\
\bottomrule
\end{tabular}
\end{table}

\begin{table}[!t]
\centering
\caption{\textbf{CLIP scores before and after SLICE watermark injection}.}
\label{tab:clip_scores}
\setlength{\tabcolsep}{3.5mm}
\begin{tabular}{@{} cccc @{}}
\toprule
\multicolumn{2}{c}{\textbf{SDP}} & \multicolumn{2}{c}{\textbf{COCO}} \\
\cmidrule(lr){1-2} \cmidrule(lr){3-4}
CLIP (before) & CLIP (after) & CLIP (before) & CLIP (after) \\
\midrule
33.034 & 32.789 & 31.342 & 31.240 \\
\bottomrule
\end{tabular}
\end{table}

\subsection{Robustness Against Common Image Corruptions}

To evaluate the reliability of the proposed watermarking mechanism in practical dissemination, we test its robustness against five common non-malicious image perturbations (rotation, JPEG compression, Gaussian blur, Gaussian noise, and brightness adjustment) using detection accuracy as the evaluation metric. As shown in Table~\ref{tab:robustness_no_cs}, our method maintains a perfect accuracy of 1.0 under both the unperturbed (Clean) baseline and the rotation operation. When subjected to JPEG compression, blur, and noise interference, the accuracy is scarcely affected, reaching 0.990, 0.988, and 0.993, respectively. Even under brightness adjustment, which causes global shifts in pixel values, it sustains a high level of 0.941, demonstrating exceptional stability.

This outstanding anti-interference capability is directly attributed to our design that decouples the watermark from fragile low-level pixels. Unlike traditional methods sensitive to pixel shifts, our verification logic relies on the core semantic features extracted by the VLM. Since the aforementioned common image degradations typically cannot substantially alter the macroscopic semantics of an image, the locally reconstructed reference noise remains highly stable. This demonstrates that our high-level semantic watermark possesses remarkable robustness in real-world image propagation scenarios.

\subsection{Evaluation of Semantic Alignment and Visual Fidelity}

To evaluate the impact of SLICE on the generated image quality and semantic fidelity, we measure the text-image alignment before and after watermark injection across the Stable-Diffusion-Prompts (SDP)~\cite{Santana2024StableDiffusionPrompts} and COCO datasets~\cite{lin2014microsoft}, as shown in Table~\ref{tab:clip_scores}. Using the CLIP score~\cite{hessel2021clipscore} as our quantitative evaluation metric, the results demonstrate that our method introduces a negligible impact: on the SDP dataset, the average CLIP score experiences a marginal decrease from 33.034 to 32.789, with a similarly slight variation observed on the COCO dataset (from 31.342 to 31.240).

Beyond quantitative metrics, preserving the perceptual quality and aesthetic integrity of the generated images is a critical prerequisite for any practical watermarking system. To further demonstrate the stealthiness of our approach, we provide a comprehensive qualitative comparison between the unwatermarked baseline generations and our SLICE-protected images in Fig.~\ref{fig:qualitative}. As observed in the visual results across both datasets, SLICE introduces no perceptible artifacts, color shifts, or structural distortions. 

\begin{figure}[!t]
  \centering
  \includegraphics[width=\linewidth]{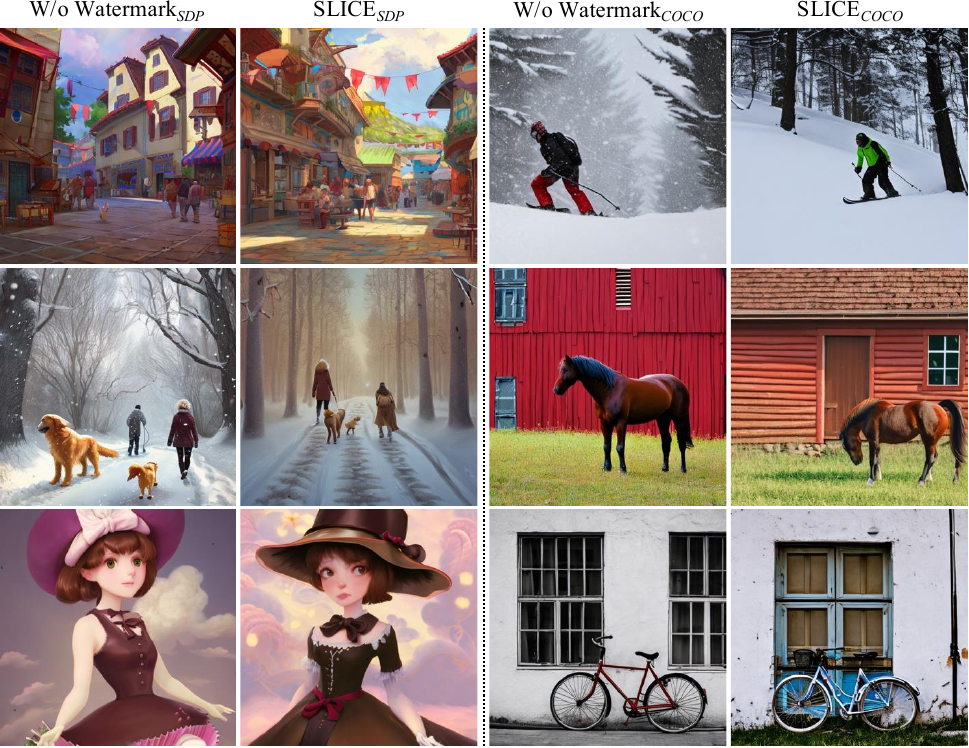}
  \caption{\textbf{Qualitative comparison of visual fidelity with and without SLICE watermarking.}} 
  \label{fig:qualitative}
\end{figure}

This exceptional semantic preservation and high visual fidelity are direct consequences of our Spatially-Partitioned Semantic Injection mechanism. Rather than dispersing cryptographic signals globally—which often leads to chaotic feature bleeding and disrupts the global latent space—our approach strictly restricts the decoupled semantic embeddings to non-overlapping local noise partitions. Consequently, SLICE achieves highly robust, multi-granularity provenance verification while flawlessly maintaining the original visual perceptual quality and consistency with the user's text prompts.

\begin{figure}[tbp]
    \centering
    \includegraphics[width=\textwidth]{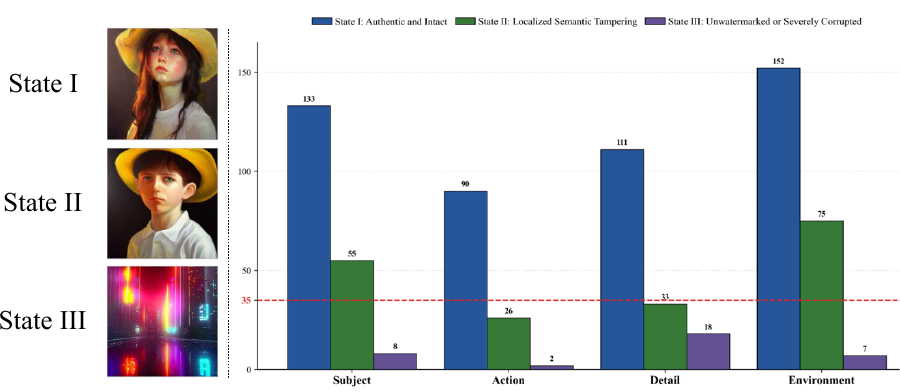} 
    \caption{\textbf{Case study of the proposed multi-granularity verification mechanism}.}
    \label{fig:verification_case}
\end{figure}
\subsection{Case Study: Multi-Granularity Verification}

To intuitively demonstrate SLICE's multi-granularity verification, we evaluate a set of images using our three-state detection mechanism, as shown in Fig. \ref{fig:verification_case}. The local match threshold for all semantic partitions is empirically set to $\tau_k=35$.

\textbf{State I (Authentic \& Intact):} For the original watermarked image (a young girl in a yellow hat), the matched patches across all four semantic factors—\textit{subject} (133), \textit{action} (90), \textit{detail} (111), and \textit{environment} (152)—significantly exceed the threshold. This robust alignment verifies both global provenance and semantic integrity.

\textbf{State II (Localized Tampering):} When the image undergoes localized tampering (e.g., altered from a girl to a young boy staring forward), our compartmentalized embedding strictly confines the disruption to specific semantic regions. As illustrated in the middle panel of Fig. \ref{fig:verification_case}, changing the character's gender causes the \textit{subject} matches to drop significantly to 55. Simultaneously, the altered behavior ("staring forward") severely destroys the \textit{action} partition (26) and its associated \textit{detail} (33), causing both to fall below the verification threshold. However, the \textit{environment} (75) and \textit{subject} (55) remain above the threshold, correctly triggering State II. This visually demonstrates SLICE's capability to precisely capture and localize fine-grained semantic modifications without outright rejecting an image that retains partial provenance.

\textbf{State III (Unwatermarked / Severely Corrupted):} For a completely unrelated image (a cyberpunk nightscape), extreme semantic mismatch causes the matched patches for all factors—\textit{subject} (8), \textit{action} (2), \textit{detail} (18), and \textit{environment} (7)—to plummet far below the threshold, correctly triggering the unwatermarked classification.

Overall, this case study visually validates that SLICE's spatial binding elegantly translates complex semantic edits into readable, partition-specific verification states.

\subsection{Limitation and Mitigation: A Joint Defense Paradigm}

While SLICE demonstrates remarkable robustness against advanced semantic editing, its reliance on spatially-partitioned binding inherently makes it sensitive to extreme geometric distortions. As shown in Table~\ref{tab:mitigation_cs}, aggressive Cropping and Scaling (C\&S) attacks disrupt the strict spatial correspondence required for latent inversion, causing the active watermark detection AUC to drop to 0.054.

\begin{table}[h]
\centering
\caption{\textbf{Mitigation of C\&S attack via passive forensics.}}
\label{tab:mitigation_cs}
\resizebox{\linewidth}{!}{%
\renewcommand{\arraystretch}{1.15}
\begin{tabular}{@{}l c ccccc@{}}
\toprule
& \textbf{Vulnerability} & \multicolumn{5}{c}{\textbf{Mitigation via Passive Forensics (AUC $\uparrow$)}} \\
\cmidrule(lr){2-2} \cmidrule(l){3-7}
\textbf{Method} & Ours (C\&S) & Feature Stab. & Resampling & Freq-Texture & ResNet18 & Spectral Aniso. \\
\midrule
\textbf{AUC}  & 0.054 & 0.850 & 0.920 & 0.940 & 0.980 & \textbf{0.990} \\
\bottomrule
\end{tabular}
}
\end{table}

However, this specific vulnerability is readily mitigated by integrating passive forensic techniques~\cite{stamm2013information, farid2009image}. Extreme C\&S manipulations inevitably leave prominent unnatural artifacts, such as interpolation traces and spectral anomalies. Our evaluations reveal that standard passive detectors---spanning semantic CNNs~\cite{bayar2018constrained, zhou2018learning}, feature-stability metrics~\cite{choi2025trainingfree_warpad, choi2024hfi}, and spectral analysis~\cite{gabarda2007blind, popescu2005exposing}---can easily expose these brute-force attacks, achieving high detection AUCs (0.85--0.99). Therefore, combining the active SLICE watermark with passive forensics establishes a highly robust joint defense paradigm, effectively neutralizing attacks that attempt to bypass semantic verification through destructive spatial transformations.

\section{Conclusion}\label{sec:conclusion}

In this paper, we identify a critical vulnerability in global semantic watermarks: their susceptibility to locally fine-grained but globally coherent edits. To overcome this, we propose Semantic Latent Injection via Compartmentalized Embedding (SLICE). By decoupling image semantics into four factors and anchoring them to spatially-partitioned latent regions, SLICE establishes a powerful multi-granularity verification logic. Extensive experiments demonstrate that our training-free framework significantly outperforms baselines against advanced generative attacks (e.g., LFA, RPM, CSI), accurately localizes tampering, and preserves exceptional visual fidelity. 

While our spatial binding remains sensitive to extreme geometric distortions, integrating passive forensics effectively mitigates this through a robust joint defense. Moving forward, designing spatially invariant decoupled semantic embeddings remains a promising direction to further advance AI provenance techniques.

\bibliographystyle{splncs04}
\bibliography{main}

\clearpage 

\appendix

\begin{center}
    \textbf{\huge Appendix}
\end{center}
\section{Proof of Theorem~\ref{thm:main}}

\begin{theorem}[Restatement of Theorem~\ref{thm:main}]
    Let $\cJ \subseteq \cK$ be the set of tampered semantic factors. Assume that Assumptions~\ref{as:bound_err} and~\ref{as:sem_pertb} hold. If the set of local threshold $\{\tau_k\}_{k\in\cK}$ staisfies $\tau_k \geq \epsilon_k + \delta_k$ for all $k \in \cK \setminus\cJ$ and $\tau_k < \Delta_k - \epsilon_k$ for all $k \in J$, then the following holds:
    \begin{enumerate}
        \item for every untampered factor $k \in \cK \setminus \cJ$, we have $m_k \geq 1 - \beta_k  - \gamma_k$;
        \item for every tampered factor $k \in \cJ$, we have $m_k \leq 1 - (\rho_k - \beta_k)_+$;
        \item the global match ratio satisfies
        \begin{align*}
            \sum_{k \in \cK \setminus \cJ} \frac{|\Omega_k|}{hw}(1-\beta_k - \gamma_k) \leq m_g \leq \sum_{k \in\cK \setminus \cJ } \frac{|\Omega_k|}{hw} + \sum_{k \in \cJ} \frac{|\Omega_k|}{hw} \Big( 1 - (\rho_k - \beta_k)_+ \Big),
        \end{align*}
    \end{enumerate}
    where $a_+ = \max\{a, 0\}$ for any $a \in \R$.
\end{theorem}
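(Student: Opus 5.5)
The argument is purely combinatorial: I would use the triangle inequality pointwise and then count the positions where it applies, with a union bound on the complements of the good sets. Throughout I read $d(p)=\|\bz_\mathrm{inv}(p)-\wh{\bz}_T(p)\|_2$, so that the distance is taken at location $p$. I would also read the global match ratio $m_g$ as computed with the same per-region thresholds $\tau_k$ on $\Omega_k$. This is what makes $m_g=\sum_{k}\frac{|\Omega_k|}{hw}m_k$, since the $\Omega_k$ partition the grid. I will flag this reading explicitly, because the statement writes $\tau_d$ in the definition of $m_g$.

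\textbf{Step 1 (untampered factors).} Fix $k\in\cK\setminus\cJ$ and let $p\in A_k\cap B_k$. The triangle inequality gives
\begin{align*}
d(p)\le \|\bz_\mathrm{inv}(p)-\bz_T(p)\|_2+\|\bz_T(p)-\wh{\bz}_T(p)\|_2\le \epsilon_k+\delta_k\le\tau_k,
\end{align*}
so every $p\in A_k\cap B_k$ is matched. By a union bound on the complements, $|\Omega_k\setminus(A_k\cap B_k)|\le(\beta_k+\gamma_k)|\Omega_k|$. Hence $m_k\ge |A_k\cap B_k|/|\Omega_k|\ge 1-\beta_k-\gamma_k$.

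\textbf{Step 2 (tampered factors).} Fix $k\in\cJ$ and let $p\in A_k\cap C_k$. The reverse triangle inequality gives
\begin{align*}
d(p)\ge \|\wh{\bz}_T(p)-\bz_T(p)\|_2-\|\bz_\mathrm{inv}(p)-\bz_T(p)\|_2\ge\Delta_k-\epsilon_k>\tau_k,
\end{align*}
so every such $p$ is unmatched. Inclusion–exclusion yields $|A_k\cap C_k|\ge |A_k|+|C_k|-|\Omega_k|\ge(\rho_k-\beta_k)|\Omega_k|$. Since a cardinality is nonnegative, this improves to $|A_k\cap C_k|\ge(\rho_k-\beta_k)_+|\Omega_k|$. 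Therefore $m_k\le 1-(\rho_k-\beta_k)_+$.

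\textbf{Step 3 (global ratio).} I would write $m_g=\sum_k \frac{|\Omega_k|}{hw}m_k$. For the lower bound, drop the nonnegative tampered terms and apply Step 1 to the untampered ones. For the upper bound, bound $m_k\le1$ on untampered regions and apply Step 2 on tampered regions.

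The main obstacle is Step 3's dependence on the threshold used in $m_g$. If $m_g$ genuinely uses a single threshold $\tau_d$ rather than the $\tau_k$, the decomposition into the $m_k$ fails. In that case I would require $\epsilon_k+\delta_k\le\tau_d<\Delta_k-\epsilon_k$ for the relevant $k$ and rerun Steps 1–2 with $\tau_d$ in place of $\tau_k$, which gives the same bounds. Otherwise everything is routine counting.
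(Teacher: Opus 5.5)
Your proposal is correct and follows the paper's proof essentially step for step. Part 1 uses the triangle inequality on $A_k\cap B_k$ plus counting. Part 2 uses the reverse triangle inequality on $A_k\cap C_k$ with inclusion--exclusion and the $(\cdot)_+$ truncation. Part 3 drops or bounds the region-wise contributions to $m_g$. Your reading of $m_g$ with the per-region thresholds $\tau_k$ rather than $\tau_d$ is exactly what the paper's proof uses implicitly when it writes $m_g$ as a sum over regions.
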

\begin{proof}
    Let $k \in \cK \setminus \cJ$ be an untampered factor. For any $p \in A_k \cap B_k$, we can show that
    \begin{align*}
        d(p) = \|\bz_\mathrm{inv}(p) - \wh \bz(p) \|_2 \leq \|\bz_\mathrm{inv}(p) - \bz_T(p) \|_2 + \|\bz_T(p) - \bz_T(p) \|_2
        \leq \epsilon_k + \delta_k \leq \tau_k,
    \end{align*}
    where the second step uses the triangle inequality. Hence every point in $A_k \cap B_k$ is counted as a match. Now, by inclusion-exclusion principle, we have
    \begin{align*}
        |A_k \cap B_k| \geq |A_k| + |B_k| - |\Omega_k| \geq &~ (1-\beta_k)|\Omega_k| + (1-\gamma_k)|\Omega_k| - |\Omega_k| \\ = &~ (1-\beta_k - \gamma_k) |\Omega_k|.
    \end{align*}
    Thus
    \begin{align*}
        m_k = \frac{1}{|\Omega_k|} \sum_{p \in \Omega_k} \mathbf{1}[d(p) \leq \tau_d] \geq \frac{|A_k \cap B_k|}{|\Omega_k|} \geq 1 - \beta_k - \gamma_k.
    \end{align*}
    This proves the first part.

    Let $k \in \cJ$ be a tampered factor. For any $p \in A_k \cap C_k$, we can show that
    \begin{align*}
         d(p) = \|\bz_\mathrm{inv}(p) - \wh \bz(p) \|_2 \geq &~ \left| \|\|\bz_T(p) - \wh\bz_T(p) \|_2 - \bz_\mathrm{inv}(p) - \bz_T(p) \|_2  \right|
        \\ \geq &~ \Delta_k - \epsilon_k > \tau_k,
    \end{align*}
    where the second step uses the reverse triangle inequality. Hence, every point in $A_k \cap C_k$ is a non-match. Again by inclusion-exclusion principle, we have
    \begin{align*}
        |A_k \cap C_k| \geq |A_k| + |C_k| - |\Omega_k| \geq (1-\beta)|\Omega_k|+ \rho_k|\Omega_k| - |\Omega_k| = (\rho_k - \beta_k)|\Omega_k|.
    \end{align*}
    Note that when $\rho_k - \beta_k < 0$, the lower bound trivially holds, so we have
    \begin{align*}
        |A_k \cap C_k| \geq  (\rho_k - \beta_k)_+ |\Omega_k|.
    \end{align*}
    Note that since all points in $A_k \cap C_k$ fail to match, the number of matches in $\Omega_k$ is at most
    \begin{align*}
        |\Omega_k| - |A_k \cap C_k| \leq  \Big( 1 - (\rho_k - \beta_k)_+ \Big) |\Omega_k|.
    \end{align*}
    Dividing by $|\Omega_k|$ gives
    \begin{align*}
        m_k \leq 1 - (\rho_k - \beta_k)_+.
    \end{align*}
    Thus we prove the second part.

    For the global lower bound, all matches in untampered partitions contribute a nonnegative amount, and by the first part, we have
    \begin{align*}
        m_g = \frac{1}{hw} \sum_{k \in \cK}\sum_{p \in \Omega_k}\mathbf{1}[d(p) \leq \tau_k] \geq \frac{1}{hw} \sum_{k \in \cK \setminus \cJ}(1-\beta_k - \gamma_k)|\Omega_k|.
    \end{align*}

    For the global upper bound, an untampered region can contribute at most all of its locations, while a tampered region contributes at most $( 1 - (\rho_k - \beta_k)_+) |\Omega_k|$ by the second part. Hence
    \begin{align*}
        m_g \leq \frac{1}{hw}\sum_{k \in \cK \setminus \cJ}|\Omega_k| + \frac{1}{hw}\sum_{k \in \cJ}\Big( 1 - (\rho_k - \beta_k)_+ \Big) |\Omega_k|.
    \end{align*}
    This proves the third part.
\end{proof}

\section{Proof of Theorem~\ref{thm:exp}}



\begin{lem}[Chernoff bounds, Theorem 2.17 in \cite{zhang2023mathematical}]\label{lem:chernoff}
    Let $X_1,\ldots,X_n$ be $n$ independent random variables taking values in $[0,1]$. Let $\mu = \E[\frac{1}{n}\sum_{i=1}^n X_i]$. Then for all $\epsilon > 0$, we have
    \begin{align*}
        \P\left(\frac{1}{n}\sum_{i=1}^n X_i \geq \mu + \epsilon\right) \leq \exp\Big(-n D_\mathrm{KL}(\mu + \epsilon \| \mu)\Big), \\
        \P\left(\frac{1}{n}\sum_{i=1}^n X_i \leq \mu - \epsilon\right) \leq \exp\Big(-n D_\mathrm{KL}(\mu - \epsilon \| \mu)\Big).
    \end{align*}
    where $D_\mathrm{KL}(a \| b) =  a \ln\frac{a}{b} + (1-a) \ln\frac{1-a}{1-b}$ is the KL divergence.
\end{lem}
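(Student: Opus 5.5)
The plan is the standard exponential-moment (Cramér--Chernoff) argument. I prove the upper-tail bound in full; the lower tail then follows by applying it to $Y_i = 1 - X_i$, which are independent, take values in $[0,1]$, and have mean $1-\mu$. This uses the symmetry $D_\mathrm{KL}(1-a \,\|\, 1-b) = D_\mathrm{KL}(a \,\|\, b)$. Write $\mu_i = \E[X_i]$, so that $\mu = \frac1n \sum_i \mu_i$, and set $a = \mu+\epsilon$.

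Two boundary cases need no analysis. If $a > 1$, the event $\frac1n\sum_i X_i \ge a$ is empty. If $a = 1$, the event forces $X_i = 1$ almost surely for every $i$. Its probability is then at most $\prod_i \P(X_i = 1) \le \prod_i \mu_i \le \mu^n$, using Markov's inequality and then AM--GM. This equals $\exp(-n D_\mathrm{KL}(1\|\mu))$. So I assume $\mu < a < 1$ from now on.

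\textbf{Step 1 (Markov and independence).} For any $\lambda > 0$,
\begin{align*}
\P\Big(\sum_i X_i \ge na\Big) \le e^{-\lambda n a}\, \E\Big[e^{\lambda \sum_i X_i}\Big] = e^{-\lambda n a} \prod_{i=1}^n \E\big[e^{\lambda X_i}\big].
\end{align*}

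\textbf{Step 2 (convexity).} Since $x \mapsto e^{\lambda x}$ is convex on $[0,1]$, we have $e^{\lambda x} \le (1-x) + x e^{\lambda}$ there. Taking expectations gives $\E[e^{\lambda X_i}] \le 1 - \mu_i + \mu_i e^{\lambda}$.

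\textbf{Step 3 (AM--GM).} By AM--GM, $\prod_i (1-\mu_i+\mu_i e^\lambda) \le (1-\mu+\mu e^\lambda)^n$. This reduces the problem to a single Bernoulli($\mu$) moment generating function:
\begin{align*}
\P\Big(\tfrac1n \textstyle\sum_i X_i \ge a\Big) \le \exp\Big(-n\big[\lambda a - \ln(1-\mu+\mu e^\lambda)\big]\Big).
\end{align*}

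\textbf{Step 4 (optimize $\lambda$).} Maximize $g(\lambda) = \lambda a - \ln(1-\mu+\mu e^\lambda)$ over $\lambda > 0$. The stationarity condition $\mu e^\lambda/(1-\mu+\mu e^\lambda) = a$ gives
\begin{align*}
e^{\lambda^*} = \frac{a(1-\mu)}{\mu(1-a)},
\end{align*}
and $\lambda^* > 0$ precisely because $a > \mu$. Substituting, $1-\mu+\mu e^{\lambda^*} = (1-\mu)/(1-a)$. Hence
\begin{align*}
g(\lambda^*) = a\ln\frac{a}{\mu} + a\ln\frac{1-\mu}{1-a} - \ln\frac{1-\mu}{1-a} = a\ln\frac{a}{\mu} + (1-a)\ln\frac{1-a}{1-\mu} = D_\mathrm{KL}(a\,\|\,\mu),
\end{align*}
which is the claimed upper-tail bound.

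The only step that needs care is Step 4: checking that the optimizer is admissible ($\lambda^* > 0$) and that the algebra collapses exactly to the KL form. The boundary cases $a \ge 1$ (and, for the lower tail, $\mu - \epsilon \le 0$) are handled by the separate arguments above. Steps 1--3 are routine.
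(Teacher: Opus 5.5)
Your proof is correct. It is the standard Cram\'er--Chernoff argument: exponential Markov, the convexity bound that reduces each factor to a Bernoulli moment generating function, AM--GM to absorb unequal means, then optimizing $\lambda$ to recover $D_\mathrm{KL}(a\|\mu)$. This is what underlies the cited Theorem 2.17, and the paper gives no proof of its own but simply imports the result.

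The one corner you leave implicit is $\mu = 0$ (and, via your reflection, $\mu = 1$ for the lower tail), where your formula for $e^{\lambda^*}$ is undefined. When $\mu=0$ every $X_i$ vanishes almost surely, so the probability is $0$, which matches $\exp(-n\cdot\infty)=0$; equivalently, let $\lambda\to\infty$ in your Step~3 bound, which there reduces to $e^{-n\lambda a}$.
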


\begin{theorem}[Restatement of Theorem~\ref{thm:exp}]
    Let $x_s$ be an unwatermarked image or keyless forgery attempt. We define $I_p = \mathbf{1}[d(p) \leq \tau_k].$ Assume that condition on the re-extracted semantics $\wt \cS$, the variables $\{I_p\}_p$ are independent and for all position $p$, $\P(I_p = 1 | \wt \cS) \leq q$ for some $q > 0$. Then
    \begin{enumerate}
        \item If $q < \tau_g$, the probability that the image is accepted as ``watermark present'' (i.e. State I or State II) satisfies
        \begin{align*}
            \P(\mathrm{State~I~or~State~II}  \mid \wt \cS) \leq \exp\Big(-hw D_\mathrm{KL}(\tau_g \| q)\Big).
        \end{align*}
        \item If $q < \tau_k$ for every $k \in \cK$, then 
        \begin{align*}
            \P( \mathrm{State~I} \mid \wt \cS) \leq \exp\left( -\sum_{k \in \cK} |\Omega_k| D_\mathrm{KL}(\tau_k \| q)\right).
        \end{align*}
    \end{enumerate}
\end{theorem}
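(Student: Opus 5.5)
The plan is to reduce both parts to the upper-tail Chernoff bound of Lemma~\ref{lem:chernoff}, applied conditionally on $\wt\cS$. Throughout we condition on $\wt\cS$, so the $I_p$ are independent Bernoulli variables with means $q_p := \P(I_p=1\mid\wt\cS)\le q$. The first step is a stochastic-domination reduction. The lemma is stated with the exact mean $\mu$, while we only know $q_p\le q$. There are two ways to handle this. One is a coupling: build independent $I'_p\sim\mathrm{Bernoulli}(q)$ with $I_p\le I'_p$ almost surely, by setting $I_p=\mathbf{1}[U_p\le q_p]$ and $I'_p=\mathbf{1}[U_p\le q]$ for i.i.d.\ uniforms $U_p$. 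The other is to redo the Chernoff computation directly. In that version, for $\lambda>0$ we have $\E[e^{\lambda I_p}]\le 1-q+qe^{\lambda}$ because $q_p\le q$. Optimizing $\lambda$ for a threshold $a>q$ then gives $\exp(-nD_\mathrm{KL}(a\|q))$. Either way, we obtain the bound $\P(\frac1n\sum I_p\ge a\mid\wt\cS)\le\exp(-nD_\mathrm{KL}(a\|q))$ whenever $q<a$. This is the lemma with $\mu=q$ and $\epsilon=a-q>0$.

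\textbf{Part 1.} By the definition of the three states, ``State I or State II'' is exactly the event $m_g\ge\tau_g$. Here $m_g=\frac{1}{hw}\sum_p I_p$, an average of $n=hw$ independent indicators. We interpret the indicator thresholds uniformly as in the statement's definition of $I_p$. Applying the reduced bound with $a=\tau_g>q$ yields $\exp(-hw\,D_\mathrm{KL}(\tau_g\|q))$ immediately.

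\textbf{Part 2.} State I requires in particular that $m_k\ge\tau_k$ for every $k\in\cK$. So $\P(\mathrm{State~I}\mid\wt\cS)\le\P(\bigcap_k\{m_k\ge\tau_k\}\mid\wt\cS)$, and we drop the global condition. The partitions $\Omega_k$ are disjoint, so the families $\{I_p\}_{p\in\Omega_k}$ for different $k$ are independent conditionally on $\wt\cS$. Hence the events $\{m_k\ge\tau_k\}$ are independent and the probability factorizes as $\prod_k\P(m_k\ge\tau_k\mid\wt\cS)$. Applying the reduced bound in each region with $n=|\Omega_k|$ and $a=\tau_k>q$ gives $\exp(-|\Omega_k|D_\mathrm{KL}(\tau_k\|q))$. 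Multiplying these factors turns the product into the exponential of the sum, which is the claimed bound.

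The only nonroutine step is the domination argument. Lemma~\ref{lem:chernoff} is phrased in terms of the true mean, and we need a bound in terms of the upper bound $q$. I would handle it with the coupling, since then $\sum_p I_p\le\sum_p I'_p$ pointwise and the tail event for the $I_p$ is contained in the tail event for the $I'_p$. A minor edge case is $\tau_g=1$ or $\tau_k=1$, where the KL divergence uses $0\ln 0=0$. In that case the bound $q^n$ is checked directly.
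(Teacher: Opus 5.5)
Your proof is correct and follows the same route as the paper. You condition on $\wt\cS$, identify ``State I or State II'' with $\{m_g\ge\tau_g\}$ and apply the Chernoff bound of Lemma~\ref{lem:chernoff}; then, for State I, you drop the global condition, factorize over the disjoint partitions $\Omega_k$ by conditional independence, and apply the same bound on each partition. Your coupling (or MGF) step, which justifies using $q$ in place of the true mean, fills a point the paper passes over silently, and describing the last step as multiplying the per-partition bounds is more accurate than the paper's reference to a ``union bound.''
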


\begin{proof}
Conditioned on $\wt{\cS}$ throughout. 
For the first part, let $S = \sum_{p} I_p$.
Then $m_g = \frac{S}{hw}.$
By definition, the event that the image is accepted as ``watermark present'' (i.e. $\mathrm{State~I}$ or $\mathrm{State~II}$) is exactly the event
\begin{align*}
    \{m_{\mathrm{g}} \geq \tau_g\} = \{S \geq \tau_g hw\}.
\end{align*}
Applying the Chernoff bound (Lemma~\ref{lem:chernoff}), we obtain
\begin{align*}
    \mathbb{P}(m_g \geq \tau_g \mid \wt{\cS})
\leq \exp\Big(-hw D(\tau_g\|q)\Big).
\end{align*}
This proves the first part.

For the second part, define for each partition
$
S_k = \sum_{p \in \Omega_k} I_p.
$
Then $m_k = \frac{S_k}{|\Omega_k|}.$
If $\mathrm{State~I}$ occurs, then $m_k \geq \tau_k$ for every $k \in \cK$. Therefore,
\begin{align*}
    \{\mathrm{State~I}\} \subseteq \bigcap_{k \in \cK} \{S_k \geq \tau_k |\Omega_k|\}.
\end{align*}
Hence
\begin{align*}
    \mathbb{P}(\mathrm{State~I} \mid \wt{\cS})
\leq
\mathbb{P}\left( \bigcap_{k \in \cK} \{S_k \geq \tau_k |\Omega_k|\} \,\middle|\, \wt{\cS} \right).
\end{align*}
Since the partitions $\{\Omega_k\}_{k\in\cK}$ are disjoint and the variables $\{I_p\}_p$ are conditionally independent given $\wt{\cS}$, the random sums $\{S_k\}_{k\in\cK}$ are conditionally independent. Thus,
\begin{align*}
\mathbb{P}(\mathrm{State~I} \mid \wt{\cS})
&\leq \prod_{k \in \cK} \mathbb{P}(S_k \geq \tau_k |\Omega_k| \mid \wt{\cS}).
\end{align*}
Applying the Chernoff bound (Lemma~\ref{lem:chernoff}) again, we have
\begin{align*}
    \mathbb{P}(S_k \geq \tau_k |\Omega_k| \mid \wt{\cS})
\leq \exp(-|\Omega_k| D(\tau_k\|q)).
\end{align*}
Then using the union bound over $k$, we have
\begin{align*}
    \mathbb{P}(\mathrm{State~I} \mid \wt{\cS})
\leq
\exp\left( - \sum_{k\in\cK} |\Omega_k| D(\tau_k\|q) \right).
\end{align*}
This proves the second part.
\end{proof}

\end{document}